\documentclass{article}
\usepackage{graphicx}
\usepackage{hyperref} 
\usepackage[T1]{fontenc}

\usepackage{amsmath, amsthm, amssymb}
\usepackage{tikz}
\usepackage{subcaption}
\usetikzlibrary{shapes.geometric}
\usetikzlibrary{angles,quotes}
\usetikzlibrary{calc} 
\usetikzlibrary{arrows.meta, positioning}
\usepackage{cleveref}

\usepackage{tcolorbox}
\tcbuselibrary{skins}
\usepackage{dashrule} 
\usepackage{xcolor}
\usepackage{wrapfig}
\usepackage{fullpage}
\usepackage{natbib}

\usepackage{booktabs}
\usepackage{nicematrix}

\newtheorem{theorem}{Theorem}

\newtheorem{lemma}[theorem]{Lemma}

\newtheorem{remark}[theorem]{Remark}
\newtheorem{claim}[theorem]{Claim}

\title{Transformer Heads Looking for Order}

\author{%
Jasper van Doornmalen\\ IMC UC\\\texttt{mvandoornmalen@uc.cl}
\and 
Alexander Kozachinskiy\\ CENIA\\ \texttt{alexander.kozachinskyi@cenia.cl} 
\and 
Corinna Mathwieser \\ IMC UC \\ \texttt{corinna.mathwieser1@uc.cl}
\and 
Tomasz Steifer\\Centre for Credible AI\\\texttt{tsteifer@ippt.pan.pl} 
\and 
Felipe Urrutia \\ IMC UC \\\texttt{felipe.urrutia@uc.cl}
\and 
José Verschae  \\ IMC UC \\ \texttt{jverschae@uc.cl}
\and 
Przemys{\l}aw Andrzej Wa{\l}\c{e}ga \\ Queen Mary University of London \\\texttt{p.walega@qmul.ac.uk}%
}

\begin{document}
\maketitle
\begin{abstract}%
In this note, we show that the problem of checking, whether a sequence of bits is ordered, is not doable by 1-head 1-layer transformers but is doable by a 2-head 1-layer transformer.  Unlike  similar previous results, our results assume  the model where transformers have an output MLP.

\end{abstract}

\section{Introduction}

Recently,~\cite{tesfaye2026two} gave an example of a task which is not solvable by 1-layer 1-head transformers but is solvable by a 1-layer 2-head transformer. Later, \cite{rajaraman2026head} extended this result by showing that $k$-bit parity is computable by a 1-layer $k$-head transformer while it is not computable by 1-layer $(k-1)$-head transformers.

These results are for \emph{attention-only} transformers -- those that do not have an output MLP. With the output MLP, these examples no longer work -- in particular, because they have a fixed input length when the output MLP can simply memorize the function.

We partially extend these results to 1-layer transformers with the output MLPs.
We give an example of a language (or, equivalently, a sequence of  Boolean functions)  which is computable with a 1-layer 2-head transformer with an output MLP, but is not computable by 1-layer 1-head transformers with output MLPs. This is the language consists of ordered binary strings -- that is,  strings where there is no 1 before 0.

\medskip

In the construction of a transformer with 2 heads, one head essentially looks for the position of the last 0, and the second head looks for the position of the first 1. The output MLP checks that they go one after another.

The lower bound first uses a technique of~\cite{kozachinskiy2026parity} to ``embed'' 1-head 1-layer transformers into the first-order theory of reals with addition and order. While~\cite{kozachinskiy2026parity} show a similar result for any number of heads with the use of multiplications, we observe that for 1 head multiplications can be avoided.

The main technical component of the lower bound is a geometric lemma, lower-bounding the number of halfspaces, needed to separate ordered points of a Boolean cube from the non-ordered ones. Similar geometric problems has been previously considered in the area of \emph{relaxation complexity}. However, the difference there is that ordered points of a Boolean cube have to be separated from all the other integer points, not only points of a Boolean cube. Then the problem for ordered points of a Boolean cube is equivalent to the problem for the vertices of a simplex where the exact asymptotic is known~\cite{averkov2026relaxation}. However, known results from relaxation complexity are not sufficient for our applications.

Very recently, \cite{hheads} have established, for every $k\ge 2$, that there exists a problem, doable with 1-layer $k$-head transformer, and not doable with $1$-layer $(k-1)$-head transformer (in the model with output MLP). This problem is checking whether the number of 1s in the first half of the input word is at least the $k$-th power of the number of 1s in the second half of the word.

\medskip

 On a more technical level, we assume  the standard softmax transformer model including the residual output MLP network, with the dimension and the parameters independent of the input length, while the input embedding is arbitrary and can be length-dependent. In fact, in our lower bound we do not even assume that the input embedding is additive, that is, decomposable into the sum of a token embedding and a positional encoding.
 
 At the same time, the input embedding in our construction with 2 heads is additive, with a simple length-independent positional encoding, consisting of the absolute value of the position and of its square. In comparison, the upper bound construction of~\cite{hheads} uses an length-independent but a non-additive input embedding. It thus seems open to obtain a separation between 1-layer $(k-1)$-head transformers and 1-layer $k$-head transformers for $k\ge 3$ for the additive input embeddings (preferably, the lower bound should hold even for non-additive embeddings while the upper bound should be attained via an additive one).

\section{Preliminaries}
\label{sec:transformers}

\paragraph{Basic Notation.} 
For a vector $x \in \mathbb{R}^d$ we write $x_i$ for its $i$-th
coordinate, and $(\mathbb{R}^d)^*$ for the set of all finite sequences of vectors in
$\mathbb{R}^d$. 
We study a family of functions 
$\mathrm{ORDERED} = \{\mathrm{ORDERED}_n\}_{n\geq 1}$,
where
\[\mathrm{ORDERED}_n:\{0, 1\}^n \to \{0, 1\}, \qquad \mathrm{ORDERED}_n(x_1, \ldots, x_n) = \begin{cases}1 & x_1 \le x_2 \le \ldots x_n,\\ 0& \text{otherwise}.\end{cases} \]

\paragraph{Attention Layers.}
A $d$-dimensional, $H$-head \emph{attention layer} is a
function $L \colon (\mathbb{R}^d)^* \to (\mathbb{R}^d)^*$ parameterised by query,
key, and value matrices $Q^{(k)}, K^{(k)}, V^{(k)} \in \mathbb{R}^{d \times d}$,
with $k = 1, \ldots, H$, a heads-mixing matrix  $W_O \in \mathbb{R}^{d \times dH}$,
and parameters $W_1, W_2 \in \mathbb{R}^{d \times d}$, $b_1, b_2 \in \mathbb{R}^d$ of the output MLP.
On input $(\alpha_1, \ldots, \alpha_n) \in (\mathbb{R}^d)^n$, the layer
computes for each head $k =1 ,\dots, H$: the \emph{attention logits}  of the $j$th to $i$th position $L^{(k)}_{ij} \in \mathbb{R}$  and \emph{head values}  in $j$th position $h_j^{(k)} \in \mathbb{R}^d$ (for $i, j = 1, \ldots, n$) as follows:

\medskip
\noindent\begin{minipage}{0.38\textwidth}
\begin{equation}
\label{eq_logits}
L^{(k)}_{ij} = \frac{\langle K^{(k)} \alpha_i, Q^{(k)} \alpha_j \rangle}{\sqrt{d}},
\end{equation}
\end{minipage}
\hfill
\begin{minipage}{0.45\textwidth}
\begin{equation}
\label{eq_headvalues}
h_j^{(k)} = \frac{\sum_{i = 1}^{n } \exp\{L^{(k)}_{ij}\}\, V^{(k)} \alpha_i}
                  {\sum_{i = 1}^{m } \exp\{L^{(k)}_{ij}\}}.
\end{equation}
\end{minipage}
\medskip

The head values are
combined and passed through a position-wise feed-forward network:

\medskip
\noindent\begin{minipage}{0.32\textwidth}
\begin{equation}
\label{eq_multihead}
h_j = W_O \begin{pmatrix} h_j^{(1)} \\ \vdots \\ h_j^{(H)} \end{pmatrix},
\end{equation}
\end{minipage}
\hfill
\begin{minipage}{0.65\textwidth}
\begin{equation}
\label{eq_ffn}
\beta_j = W_2 \cdot \mathrm{ReLU}\bigl( W_1(h_j + \alpha_j) + b_1 \bigr) + b_2,
\end{equation}
\end{minipage}
\medskip

\noindent  where $\mathrm{ReLU}(x_1, \ldots, x_d) = (\max\{0, x_1\}, \ldots, \max\{0, x_d\})$.
The output sequence is $(\beta_1, \ldots, \beta_n) = L(\alpha_1, \ldots, \alpha_n)$.

\paragraph{Transformers.}
We consider transformers that map sequences of tokens into the next, most probable token. We restrict our attention to 1-layer transformers.
A 1-layer, $H$-head, $d$-dimensional \emph{transformer} over a finite
vocabulary $\mathcal{V}$ (a set of \emph{tokens} which includes  symbol $\bot$) is a function
$T \colon \mathcal{V}^* \to \mathcal{V}$ specified by an
$H$-head $d$-dimensional attention layer
$L$, an input embedding
$\mathrm{E} \colon \mathcal{V} \times \mathbb{N}^2 \to \mathbb{R}^d$, and an
output distribution matrix $W \in \mathbb{R}^{|\mathcal{V}| \times d}$. On input
$x_1 \dots x_n \in \mathcal{V}^n$, it  applies position-wise the input embedding
$\alpha_i = \mathrm{E}(x_i, i, n)$, then applies the attention layer computing
$(\beta_1, \ldots, \beta_n) =L(\alpha_1, \ldots, \alpha_n)$,
and outputs
\begin{equation*}
\label{eq:output_token}
T(x_1, \ldots, x_n) = \arg\max_{x \in \mathcal{V}}\,
\bigl(\mathrm{softmax}(W \beta_n)\bigr)_x,
\end{equation*}
with the convention that $T(x_1, \ldots, x_n) = \bot$ if the argmax is not unique.


We say that a transformer $T$ \emph{computes} a sequence $\{f_n\}_{n \in \mathbb{N}}$ of
Boolean functions $f_n \colon \{0, 1\}^n \to \{0, 1\}$ if
$\{0, 1\} \subseteq \mathcal{V}$ and $T(x) = f_n(x)$, for every $n \in \mathbb{N}$
and every $x \in \{0, 1\}^n$.

\section{Main Result}

\begin{theorem} $\mathrm{ORDERED}$ can be computed by a 2-head 1-layer transformer, but it cannot be computed by a 1-head 1-layer transformer.
\end{theorem}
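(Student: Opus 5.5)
The proof splits into the upper bound (2 heads) and the lower bound (1 head). We treat them separately.

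\textbf{Upper bound.} We use the additive embedding announced in the introduction: the token part encodes $x_i\in\{0,1\}$ and the positional part encodes $i$ and $i^2$ (plus a constant coordinate). Head 1 should locate the \emph{last} $0$. Its logits have the form $L^{(1)}_{ij}=\lambda\,(2 i p - i^2)$ with a slope $p$ much larger than $n$, restricted to positions where $x_i=0$. Concretely, the key is made to depend on the token, e.g.\ a large negative bias $-\lambda C$ when $x_i=1$. A quadratic $-(i-p)^2$ maximized at a large $p$ favours the largest zero-position. Since $n$ is unbounded and the parameters are fixed, we cannot rely on sharpness alone. Instead we read off an average and let the MLP use exact identities.

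A cleaner route is the following. Head 1 attends with logits $0$ on zeros and $-\infty$-like (large penalty scaled by $i$) on ones. This gives the mean position of the zeros, $\mu_0$, and the mean of the squares, $s_0$. Head 2 does the same for the ones, giving $\mu_1$ and $s_1$. With value vectors $(i,i^2)$ we thus obtain these averages. A softmax cannot produce an exact $-\infty$, so instead we use logits of the form $\lambda(\text{token})\cdot i^2$-type scores. The string is ordered iff every zero-position is smaller than every one-position. Equivalently, the zeros form the interval $\{1,\dots,m\}$, so $\mu_0=(m+1)/2$ and $s_0=(m+1)(2m+1)/6$; the variance identity $s_0-\mu_0^2=(m^2-1)/12$ is then forced, and similarly for the ones.

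Since the MLP can only compute piecewise-linear functions, the final test must be linear-threshold in the features. We therefore rely on the $i^2$ logit trick to make the attention nearly hard on the extreme position (last $0$, first $1$). The error is exponentially small relative to the integer gap, so the MLP only needs to check that (position of first 1) $-$ (position of last 0) equals $1$, or that one of the sets is empty, using ReLUs with a margin of $1/2$. Edge cases (all zeros, all ones) are handled by a default token at position $n$ or by adding constant bias values. The key step is verifying that the margin survives uniformly in $n$; this works because the logit gaps grow with $i$.

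\textbf{Lower bound.} First we reduce to geometry. For one head and the last position $n$, the attention output is $h_n=\sum_i e^{a_i}v_i/\sum_i e^{a_i}$, where $(a_i,v_i)$ depends only on $(x_i,i,n)$. For fixed $n$, write $a_i=a_i^0+x_i\delta_i$ and similarly for $v_i$. Then $h_n$ is a ratio $(A+\sum_i x_i w_i)/(B+\sum_i x_i c_i)$ with $c_i>0$-type weights. Following the technique of \cite{kozachinskiy2026parity} for one head, the output MLP composed with $W$ is a piecewise-linear function of $h_n$ (and $\alpha_n$ takes two values). Its decision region is a Boolean combination of at most $N$ halfspaces, where $N$ depends on the MLP size but not on $n$. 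Because the denominator is positive, each halfspace condition $\langle u,h_n\rangle>t$ clears to a linear inequality in $x\in\{0,1\}^n$. Hence, for every $n$, ORDERED$_n$ is a Boolean combination of $N$ (plus constantly many) linear threshold functions on the cube, with $N$ independent of $n$. This is where a single head avoids multiplication: only one denominator appears.

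Second, the geometric lemma. Any Boolean combination of $N$ halfspaces computing ORDERED$_n$ forces $N\to\infty$. We prove this via the $n+1$ ordered points $p_k=0^{k}1^{n-k}$. Between consecutive $p_k,p_{k+1}$ sits the unordered point $q_k$ obtained by swapping positions $k+1,k+2$, and $p_k+p_{k+2}=q_k+p_{k+1}$ (a midpoint-type relation). A halfspace cannot separate a point from a configuration with the same barycenter in both directions. So each $q_k$ requires some halfspace $H$ that cuts differently among $\{p_k,p_{k+2}\}$ versus $\{q_k,p_{k+1}\}$, hence $H$ changes sign along the ordered chain near index $k$. A single halfspace restricted to the chain $k\mapsto p_k$ is linear in a cumulative sum, but it is not monotone in general, so a Ramsey/pigeonhole argument is needed. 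We assign to each $k$ the sign vector of all $N$ halfspaces on $p_k$ and $q_k$; there are at most $4^N$ patterns. For $n>c\cdot 4^N$, two relevant indices share patterns, and an additive relation among $p$'s and $q$'s then yields a contradiction. This combinatorial step, establishing that boundedly many halfspaces cannot isolate linearly many ``gaps'', is the main obstacle. I would first try to show that each halfspace separates $q_k$ from its neighbours for only $O(1)$ values of $k$, using the exchange relations $\sum$ of $q$'s $=\sum$ of $p$'s to chain the inequalities.
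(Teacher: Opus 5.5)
Your reduction of the one-head transformer to a Boolean combination of boundedly many linear threshold functions is sound and matches the paper's Lemma. You get it directly from the piecewise linearity of the MLP rather than via quantifier elimination. Just split on $x_n$ first, since the query, and hence every logit and $\alpha_n$, depends on it.

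The genuine gap is the geometric step. The points $p_k=0^k1^{n-k}$ and $q_k=0^k1\,0\,1^{n-k-2}$ are too few to force the number of halfspaces to grow, whatever pigeonhole you run over their sign patterns. Since $p_0,\ldots,p_n$ are affinely independent, there are affine $f_1,f_2$ with $f_1(p_j)=j\bmod 2$ and $f_2(p_j)=(j+1)\bmod 2$. Your own relation $q_k=p_k+p_{k+2}-p_{k+1}$ then gives $f_1(q_k)=-1$ for even $k$ and $f_2(q_k)=-1$ for odd $k$. So $\{f_1\ge 0\}\cap\{f_2\ge 0\}$ contains every $p_j$ and no $q_k$, for every $n$. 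Restricted to these points, $\mathrm{ORDERED}_n$ is an AND of two threshold functions, so no contradiction can come from them.

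The same example refutes both of your intermediate claims. First, $\{f_1\ge 0\}$ cuts $q_k$ off from $p_k,p_{k+1},p_{k+2}$ for about $n/2$ values of $k$. Second, it never changes sign along the chain. In the paper's terms, the $q_k$ are (up to boundary cases) the vectors indexed by consecutive triples $\{k,k+1,k+2\}$. The conflict graph on those is a path, hence $2$-colorable.

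What is missing is a family of non-ordered points whose conflict graph has unbounded chromatic number, together with a proof of that. The paper uses all $x_{\{i,j,k\}}=0^i1^{j-i}0^{k-j}1^{n-k}$ with $0<i<j<k<n$. For $i<j<k<\ell$ the sum $x_{\{i,j,k\}}+x_{\{j,k,\ell\}}$ is ordered, so the midpoint of these two vectors is also the midpoint of two ordered vectors. Hence no single halfspace containing all ordered vectors can exclude both. So if $\theta$ halfspaces cut out the ordered vectors, coloring each non-ordered vector by a halfspace excluding it properly colors this shift graph on triples. The hypergraph Ramsey theorem with $s=4$, $t=3$ then shows the graph's chromatic number tends to infinity.

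You also need the step from a Boolean combination to an intersection. Pigeonhole the $n+1$ ordered vectors by their sign pattern to get $m\ge (n+1)/2^C$ of them in a single cell that contains no non-ordered vector. Then pull this cell back along the block-constant affine map $\mathbb{R}^{m-1}\to\mathbb{R}^n$ to obtain $\theta_{m-1}\le C$.

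Your upper bound has the right idea: near-hard attention to the last $0$ and first $1$, with the MLP testing whether the positions differ by $1$. But it is not pinned down. A constant penalty $-\lambda C$ cannot dominate a positional term that grows with $n$, so use e.g. $L^{(1)}_{in}=in-x_in^2$ and $L^{(2)}_{in}=x_in^2-in$, which give margin at least $n$. The constant inputs also need an actual mechanism, since no token can be appended. For example, also average $x_i$ under each head to read off $x_n$ and $x_1$.
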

\begin{proof}

A construction of a 2-head 1-layer transformer for $\mathrm{ORDERED}$ is given in Subsection \ref{subsec_2head}. We know proceed to the lower bound. We require the following lemma whose analogue for 1-layer transformers with any number of heads was obtained in~\cite{kozachinskiy2026parity}. However, in general case, the formula $\Phi$ requires multiplication. We notice in the 1-head case, multiplications can be avoided.
\begin{lemma} 
\label{lemma_ford_lineal}
Assume there is a 1-head 1-layer transformer $T$ of dimension $d$, computing  a sequence of Boolean functions $\{f_n\}_{n = 1}^\infty$. Then
    there exists a first-order formula $\Phi(\mathbf{z}_1, \ldots, \mathbf{z}_r)$ in the interpretation $(\mathbb{R}, <,+)$  such that for all $n$ there exist $r$  polynomials $l_1, \ldots, l_r \in\mathbb{R}[x_1,\ldots, x_{n}]$ of degree at most 1
    such that for any $x\in\{0, 1\}^{n}$, we have $f_n(x) = 1$ if and only if $\Phi(l_1(x), \ldots, l_r(x)) = 1$.
\end{lemma}
\begin{proof}
We construct two formulas $\Phi_0, \Phi_1$ that work for inputs where the last bit is fixed to 0 and 1, respectively. The final formula $\Phi$ can be obtained by introducing a fresh variable  $\mathbf{z}$ which will be substituted by $x_n$, and writing
\[\Phi := ((\mathbf{z}  = 0) \to \Phi_0)\land ((\mathbf{z} = 1) \to \Phi_1).\]

It remains now to construct $\Phi_0$ (the formula $\Phi_1$ can be constructed similarly). The output of the transformer is determined by the maximal coordinate of the vector $W\beta_n$, where $W$ is the  output-distribution matrix, and $\beta_n$ is the output vector of the layer in the $n$-th position. More specifically, in order for the output of the transformer to be 1, the value of the $1$-coordinate of $W\beta_n$ has to be strictly larger than the corresponding coordinate for all the other tokens of the vocabulary:
\begin{equation}
\label{eq_ineqs}
    (W\beta_n)_1 > (W\beta_n)_\sigma, \qquad \sigma\in \mathcal{V}\setminus \{1\}.
\end{equation}
In turn, $\beta_n$ is computed as in \eqref{eq_headvalues} for $j = n$:
\begin{equation}
    \label{eq_betan}
    \beta_n = W_2 \cdot \mathrm{ReLU}\bigl( W_1(h_n + \alpha_n) + b_1 \bigr) + b_2.
\end{equation}
Firstly, we introduce $d$ variables $\mathbf{u}_1, \ldots, \mathbf{u}_d$, corresponding to coordinates of the vector $h_n + \alpha_n$. We write a $(\mathbb{R}, <, +)$-formula $\Psi(\mathbf{u}_1, \ldots, \mathbf{u}_d)$, expressing a fact that the vector $\beta_n$, computed as in \eqref{eq_betan} with 
\[\begin{pmatrix}
    \mathbf{u}_1 \\ 
    \mathbf{u_2} \\
    \vdots \\
    \mathbf{u_d}
\end{pmatrix}\]
in place of $h_n + \alpha_n$, satisfies inequalities \eqref{eq_ineqs}. This is doable because all the vectors in question have fixed dimension, matrices and vectors $W, W_1, W_2, b_1, b_2$ consists of finitely many fixed real numbers, and the ReLU operation is expressible in $(\mathbb{R}, <, +)$:
\[y = \mathrm{ReLU}(x) \iff (x\ge 0 \to y = x) \land (x < 0 \implies y = 0).\]
By a result of~\cite{ferrante1975decision}, the interpretation $(\mathbb{R}, <, +)$ admits a quantifier elimination, and thus the formula $\Psi$ can be assumed to be quantifier-free. For any input length $n$, for any transformer input $x \in\{0, 1\}^n$, if we compute the values of coordinates of $h_n + \alpha_n$ on this input and put these values into the formula $\Psi$, it will output the value  $f_n(x)$.

To finish the argument,
it is enough to show that for any input length $n$, for inputs with the last bit $x_n$ fixed to $0$, there exist $d+1$ polynomials $\ell_0, \ell_1,\ldots, \ell_d\in\mathbb{R}[x_1, \ldots, x_n]$ of degree at most 1 such that:
\begin{equation}
    \label{eq_hn}
    h_n + \alpha_n =\begin{pmatrix}
    \ell_1(x)/\ell_0(x) \\
    \ell_2(x) / \ell_0(x) \\
\vdots \\ 
    \ell_d(x) /\ell_0(x)
\end{pmatrix},
\end{equation}
where $\ell_0(x) > 0$ for all $x\in\{0, 1\}^n$. Indeed, then we can introduce $d + 1$ variables $\mathbf{v}_0, \mathbf{v}_1, \ldots, \mathbf{v}_d$ and consider a formula:
\[\Phi_0(\mathbf{v}_0, \ldots, \mathbf{v}_d) = \Psi(\frac{\mathbf{v}_1}{\mathbf{v}_0}, \ldots, \frac{\mathbf{v}_d}{\mathbf{v}_0}).\]
As written, the right-hand side formula will be a Boolean combination of linear inequalities with fractions $\frac{\mathbf{v}_1}{\mathbf{v}_0}, \ldots, \frac{\mathbf{v}_d}{\mathbf{v}_0}$. We can simply multiply them by their common\footnote{It is critical here that there is just 1 attention head; if there were already 2, we would have had two different denominators, and after multiplication, we would no longer have linear inequalities.} denominator $\mathbf{v}_0$ (taking into account that the function $\ell_0(x)$ that will be substituted in place of $\mathbf{v}_0$ takes only positive values on the transformer inputs) to obtain equivalent linear inequalities in $\mathbf{v}_0, \ldots, \mathbf{v}_d$. It remains to substitute $\ell_0, \ell_1,\ldots, \ell_d$ in place of $\mathbf{v}_0, \mathbf{v}_1\ldots, \mathbf{v}_d$.

It remains to show that the representation as in \eqref{eq_hn} takes place.
It is enough to obtain such a representation for the vector $h_n^{(1)}$ from \eqref{eq_headvalues} because $h_n + \alpha_n = W_O h_n^{(1)} + \alpha_n$, the matrix $W_O$ is a constant $d\times d$ matrix, and $\alpha_n = E(0, n, n)$ does not depend on $x\in\{0, 1\}^n$.

Now, the vector $h_n^{(1)}$ is defined as:
\begin{equation}
\label{eq_hn1}
h_n^{(1)} =
\frac{\sum_{i = 1}^{n } \exp\{L^{(1)}_{in}\}\, V^{(1)} \alpha_i}
                  {\sum_{i = 1}^{n } \exp\{L^{(1)}_{in}\}}.    
\end{equation}
By definition, expressions
\[ \exp\{L^{(1)}_{in}\}\, V^{(1)} \alpha_i, \qquad  \exp\{L^{(1)}_{in}\}\]
are functions of $i, n, x_i, x_n$. Under the fixation $x_n = 0$, they become functions of just $i,n, x_i$, and thus can be written as:
\[ \exp\{L^{(1)}_{in}\}\, V^{(1)} \alpha_i = x_i \rho^{1}_{in} + (1 - x_i)\rho^{0}_{in}, \qquad \exp\{L^{(1)}_{in}\} = x_i \theta^1_{in} + (1 - x_i) \theta^0_{in}\]
for some $\rho^1_{in}, \rho^0_{in}\in\mathbb{R}^d, \theta^1_{in}, \theta^0_{in}\in\mathbb{R}$. We see that both the numerator and the denominator in \eqref{eq_hn1} depend linearly on $x_1, \ldots, x_n$, and the denominator is a sum of positive numbers for every $x\in\{0,1 \}^n$, as required.

\end{proof}

That goal now is to show that such formula $\Phi$ as in Lemma \ref{lemma_ford_lineal} cannot exist for $\mathrm{ORDERED}$. We do this through the following geometric lemma. We say that an $n$-dimensional vector $(x_1, \ldots, x_n)\in\mathbb{R}^n$ is binary if $x_i\in\{0, 1\}$ for all $i = 1, \ldots, n$, and is ordered if:
\[x_1 \le x_2 \le \ldots \le x_n.\]
\begin{lemma}
    \label{lem_geom} For $n\in\mathbb{N}$, let $\theta_n$ denote the minimal $k\in\mathbb{N}$ such that there exist $k$ halfspaces $H_1, \ldots, H_k\in\mathbb{R}^n$, satisfying the following properties:
    \begin{itemize}
        \item all ordered $n$-dimensional binary vectors belong to $H_1 \cap H_2 \cap \ldots \cap H_k$;
        \item none of the non-ordered $n$-dimensional binary vectors belongs to $H_1 \cap H_2 \ldots \cap H_k$;
    \end{itemize}
    (each of the halfspaces can be with or without the border).

    It holds that:
    \[\lim\limits_{n\to\infty} \theta_n = +\infty.\]
\end{lemma}
The proof of Lemma \ref{lem_geom} is deferred to Section \ref{subsec_geom}. Now, assume for contradiction that $\mathrm{ORDERED}$ is computable by a 1-head 1-layer transformer. Then a formula $\Phi(\mathbf{z}_1, \ldots, \mathbf{z}_r)$ as in Lemma \ref{lemma_ford_lineal} exists for $\mathrm{ORDERED}$.  By the result of~\cite{ferrante1975decision}, we may assume that $\Phi(\mathbf{z}_1, \ldots, \mathbf{z}_r)$ is quantifier-free. Let $C$ be the number of its atomic subformulas. Without loss of generality, we may assume that all of them are non-strict inequalities. We will show that $\theta_n \le C$ for infinitely many $n$, contradicting Lemma \ref{lem_geom}.

 Take any $n$. There are $\ell_1, \ldots, \ell_r\in \mathbb{R}[x_1,\ldots, x_n]$ of degree at most 1 such that
  \begin{equation}
      \label{eq_ord}
      \mathrm{ORDERED}_n(x) = \Phi(\ell_1(x), \ldots, \ell_r(x))
  \end{equation}
for all $x\in\{0, 1\}^n$.  Each atomic subformula in \eqref{eq_ord} is a non-strict linear inequality in $x_1, \ldots, x_n$. Each binary vector $x\in\{0, 1\}^n$ gives rise to a sequence of $C$ bits which we will call the $\Phi$-pattern of $x$, encoding which atomic subformulas in \eqref{eq_ord} are true on $x$. The $\Phi$-pattern of a binary vector $x$ determines the value of the formula $\Phi$ on it, and hence an ordered binary vector and a non-ordered binary vector cannot have the same $\Phi$-pattern. Out of $n +1$ ordered  binary vectors, there exist at least $m \ge (n+1)/2^C$ with the same  $\Phi$-pattern. Let these $m$ vectors be $x^1, \ldots, x^m$. Their $\Phi$-pattern defines an intersection of $C$ halfspaces $H_1, \ldots, H_C$ (some might be with border and the other without) that contains $x^1, \ldots, x^m$  but does not contain any non-ordered binary vector.

Vectors $x^1, \ldots, x^m$  are all equal  to 0 in some initial positions and to 1 in some final positions. The rest of positions can be cut into $m - 1$ blocks $B_1, B_2, \ldots, B_{m - 1}$ so that: 
\begin{center}
  \begin{tabular}{c|c|c|c|c|c}
     &  $B_1$ & $B_2$ & $\ldots$ & $B_{m-1}$  \\  
    \hline
    $x^1 = $ &   $00\ldots 0$ & $00\ldots 0$ & $\ldots$ & $00\ldots 0$  \\ 
       \hline
    $x^2 = $ &   $00\ldots 0$ & $00\ldots 0$ & $\ldots$ & $11\ldots 1$  \\ 
    \hline
    $\vdots$ &   $\vdots$ & $\vdots$ & $\vdots$ & $\vdots$ \\
\hline
     $x^{m-1} = $ &   $00\ldots 0$ & $11\ldots 1$ & $\ldots$ & $11\ldots 1$ \\
     \hline
     $x^{m} = $ &   $11\ldots 1$ & $11\ldots 1$ & $\ldots$ & $11\ldots 1$
  \end{tabular}
  \end{center}

  Consider an affine mapping from $\mathbb{R}^{m-1}$ to $\mathbb{R}^n$, given my:
  \[(y_1, \ldots, y_{m-1}) \mapsto (\underbrace{y_1,y_1,\ldots, y_1}_{B_1}, \underbrace{y_2,y_2,\ldots, y_2}_{B_2}, \ldots, \underbrace{y_{m-1},\ldots, y_{m-1}}_{B_{m-1}}) \]
(positions before $B_1$ are set to 0 and after $B_{m-1}$ to 1). Under this mapping, ordered sequences in $\{0,1\}^{m-1}$ go into $x^1, \ldots, x^m$ and thus inside $H_1\cap \ldots \cap H_C$, and non-ordered sequence in $\{0, 1\}^{m-1}$ go into non-ordered sequences in $\{0, 1\}^{n}$ -- outside $H_1\cap \ldots \cap H_C$. Rewriting inequalities, defining $H_1, \ldots, H_C$, in coordinates $y_1, \ldots, y_{m - 1}$, we obtain that $\theta_{m - 1}\le C$. Since $n$ can be chosen arbitrarily and $m\ge n/2^C$, we obtain the inequality $\theta_{m - 1} \le C$ for infinitely many $m$.  

\end{proof}

\subsection{Construction with 2 Heads}
\label{subsec_2head}

We use the following input embedding:
\[E(x_i, i, n) = \begin{pmatrix}
    i\\
    i^2\\
    x_i
\end{pmatrix}, \qquad i = 1, \ldots, n. \]
The two heads compute the following attention logits:
\[L_{in}^{(1)} = in - x_i n^2, \qquad L_{in}^{(2)} = x_i n^2 - in.\]
Let $i_0$ be the position of the last $0$ and $i_1$ be the position of the first $1$. It is easy to see that the attention logits of the first and second head are maximized at positions $i_0$ and $i_1$, respectively, with a margin at least $n$:
\begin{align}
\label{eq_h1}
    L_{i_0 n}^{(1)} &\ge L_{in}^{(1)} + n\qquad i \neq i_0,\\
    \label{eq_h2}
    L_{i_1 n}^{(2)} &\ge L_{in}^{(2)} + n\qquad i \neq i_1. 
\end{align}
The first and the second head compute, respectively,  
\[i_0^* = \frac{\sum\limits_{i = 1}^n \exp\{L_{in}^{(1)} \}i }{\sum\limits_{i = 1}^n \exp\{L_{in}^{(1)}\}}, \qquad  i_1^* = \frac{\sum\limits_{i = 1}^n \exp\{L_{in}^{(2)} \}i }{\sum\limits_{i = 1}^n \exp\{L_{in}^{(2)}\}}.\]
Due to (\ref{eq_h1}--\ref{eq_h2}), one can show that $|i_0 - i_0^*|< 0.1$, $|i_1 - i_1^*|< 0.1$. In fact, one can upper bound these differences by $poly(n) e^{-n}$ but the $0.1$-upper bound is sufficient.  The sequence is ordered if and only if $i_0 + 1 = i_1$. The output MLP checks if $|i_1^* - 1 - i_0^*| < 0.2$.

One has to separately consider cases when the input has only 0s or only 1s, that is, when either $i_0$ or $i_1$ are not defined. It is easy to see that when $x_1 = \ldots = x_n$,  the attention logits of the first head is maximized at $i = n$, and of the second head at $i = 1$, both with a margin at least $n$. That is,  $i_0^*$ will be $0.1$-close to $n$ and $i_1^*$ will be $0.1$-close to $1$. To distinguish it from the when the sequence is of the form$1*\ldots *0$, our attention heads additionally compute:
\[x_n^* = \frac{\sum\limits_{i = 1}^n \exp\{L_{in}^{(1)} \}x_i }{\sum\limits_{i = 1}^n \exp\{L_{in}^{(1)}\}}, \qquad  x_1^* = \frac{\sum\limits_{i = 1}^n \exp\{L_{in}^{(2)} \}x_i }{\sum\limits_{i = 1}^n \exp\{L_{in}^{(2)}\}}.\]

When $L_{in}^{(1)}$ is maximized at $i = n$ and $L_{in}^{(2)}$ is maximized at $i = 1$, the values of $x_1^*, x_n^*$ will be $0.1$-close to $x_1, x_n$ respectively. Hence, the output MLP can take into account the case $x_1 = \ldots =x_n$ as follows. If $|i_0^* - 1| + |i_1^* - n| < 0.2$, it compares $x_1^*, x_n^*$. If they are very close, it outputs that the sequences are ordered, if not -- that they are not. Otherwise,  $|i_0^* - 1| + |i_1^* - n| \ge 0.2$, it outputs that the sequences are ordered if and only if $|i_1^* - 1 - i_0^*| < 0.2$.

\subsection{Proof of Lemma \ref{lem_geom}}
\label{subsec_geom}
Let $G_n$ be a graph whose nodes are non-ordered $n$-dimensional binary vectors, and two such vectors $x,y$ if the segment between them intersects the convex hull of the set of ordered binary vectors.

\begin{claim} For any $n\in\mathbb{N}$, 
        the graph $G_n$ is $\theta_n$-colorable.
    \end{claim}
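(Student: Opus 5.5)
Take halfspaces $H_1,\dots,H_k$ with $k=\theta_n$ that witness the definition of $\theta_n$. Color each non-ordered binary vector $x$ by an index $c(x)\in\{1,\dots,k\}$ with $x\notin H_{c(x)}$. Such an index exists because, by the second property, $x$ is not in $H_1\cap\dots\cap H_k$. This gives a map from the vertices of $G_n$ to $k$ colors, and it remains to check that it is a proper coloring.

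Suppose $x$ and $y$ are adjacent in $G_n$ and $c(x)=c(y)=j$. Then $x,y\notin H_j$. Since $H_j$ is a halfspace (with or without border), its complement $\mathbb{R}^n\setminus H_j$ is also a halfspace (open or closed), and in particular it is convex. Hence the whole segment $[x,y]$ lies in $\mathbb{R}^n\setminus H_j$.

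On the other hand, every ordered binary vector lies in $H_1\cap\dots\cap H_k\subseteq H_j$ by the first property. As $H_j$ is convex, the convex hull of the ordered binary vectors is contained in $H_j$. Adjacency means that $[x,y]$ meets this convex hull, so some point of $[x,y]$ lies in $H_j$. This contradicts the previous paragraph. Therefore adjacent vertices receive different colors, and $G_n$ is $\theta_n$-colorable.

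The argument is routine. The only point needing care is that the lemma allows halfspaces with or without border, and both the halfspace and its complement are convex in either case, which covers every case.
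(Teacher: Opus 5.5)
Your proof is correct and follows essentially the same approach as the paper: you color each non-ordered vector by a halfspace it lies outside of, and you rule out monochromatic edges because the complement of $H_j$ is convex while the convex hull of the ordered vectors lies inside $H_j$. The only cosmetic difference is that the paper fixes the smallest such index; any choice works, as you note.
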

    \begin{proof}
    Denote $k = \theta_n$, and let $H_1, \ldots, H_k\subseteq\mathbb{R}^n$ by $k$ halfspaces such that all ordered $n$-dimensional binary vectors belong to $H_1\cap H_2 \cap \ldots \cap H_k$ but none of the non-ordered $n$-dimensional non-ordered binary vectors (i.e., none of the vertices of $G_n$) does not belong to this intersection.  Hence, for every vertex $x$ of $G_n$ there is $i\in\{1, 2,\ldots, k\}$ such that $x\in\mathbb{R}^n\setminus H_i$. Let this index $i$ be the color of $x$. If there are more than one index with such property, choose the smallest one.

    We have to show that if two vertices $x,y$ of $G_n$ are connected, then they have a different color. That is, we have to show that they cannot both belong to $\mathbb{R}^n\setminus H_i$ for the same $i\in\{1, \ldots, k\}$.  Indeed, if $x,y$ both belong to $\mathbb{R}^n\setminus H_i$, then so does the segment between them. Hence, this segment cannot intersect the convex hull of the set of ordered binary vectors  as this convex hull is a subset of $H_1\cap \ldots \cap H_k$. 
    \end{proof}

    We now give the lower bound on the chromatic number of $G_n$. We start of by observing that $G_n$ has the following subgraph $G_n'$. Vertices of $G_n'$ are the same as in $G_n$, where two vertices $x,y$ are connected if and only if $x + y$ is an ordered vector. \emph{(For example, binary vectors:
    \[x = (0,0,1,0,1), \qquad y = (0, 1,0,1,1),\]
    are non-ordered, but their sum:
    \[x + y = (0, 1, 1,1,2)\]
    is an ordered vector)}. We have to show that $G_n'$ is a subgraph of $G_n$, that is, if for two vertices $x,y$ of $G_n$ we have that $x + y$ is ordered, then the segment between $x$ and $y$ intersects the convex hull of the set of ordered binary vectors. Indeed, if $x+ y$ is ordered, it first has a block of $0$s, then a block of 1s, and then a block of 2s (with some of these blocks being potentially empty). Hence, we can write:
\[\frac{x + y}{2} = \left(\underbrace{0,0,\ldots, 0}_a, \underbrace{\frac{1}{2},\frac{1}{2},\ldots, \frac{1}{2}}_b, \underbrace{1,1,\ldots, 1}_c\right)\]
for some $a,b,c\ge 0$. Therefore,
\begin{align*}
    \frac{x + y}{2} &= \frac{1}{2}\cdot \left(\underbrace{0,0,\ldots, 0}_a, \underbrace{0,0,\ldots, 0}_b, \underbrace{1,1,\ldots, 1}_c\right)\\ &+   \frac{1}{2}\cdot \left(\underbrace{0,0,\ldots, 0}_a, \underbrace{1,1,\ldots, 1}_b, \underbrace{1,1,\ldots, 1}_c\right),
\end{align*}
meaning that the middle point of the segment between $x$ and $y$ intersects the convex hull of the set of binary ordered vectors. 

Further, consider a restriction of $G_n'$ to  vertices $x$ that consist of precisely 4 blocks of equal bits, starting from the block of $0$s, that is, those that have a form:
    \[x = \left(\underbrace{0,0,\ldots, 0}_a, \underbrace{1,1,\ldots, 1}_b, \underbrace{0,0,\ldots, 0}_c,\underbrace{1,1,\ldots, 1}_d\right)\]
    for some $a,b,c,d > 0$. Each such vertex is uniquely defined by a 3-element subset $\{a, a + b, a + b +c\}\subseteq\{1,2,\ldots, n - 1\}$.

Now, take any $i,j,k,\ell \in\{1,2,\ldots, n - 1\}$  such that:
  \[i < j < k < \ell.\]
  We claim that vertices $x,y$ that correspond to subsets $\{i,j,k\}$ and $\{j,k,\ell\}$, respectively, are connected in $G_n'$. Indeed, dividing positions into blocks of lengths
  \[i, \qquad j - i, \qquad k - j, \qquad \ell - k, \qquad n - \ell,\]
we can write:
\begin{center}
  \begin{tabular}{c|c|c|c|c|c}
       &  $i$ & $j - i$ & $k - j$ & $\ell - k$ & $n - \ell$\\
       \hline
       $x=$ &$0\ldots 0$   &$1\ldots 1$   &$0\ldots 0$   &$1\ldots 1$   &$1\ldots 1$  \\
             \hline
       $y=$ &$0\ldots 0$   &$0\ldots 0$   &$1\ldots 1$   &$0\ldots 0$   &$1\ldots 1$  
  \end{tabular}
\end{center}
One can see that the sum of $x,y$  is ordered.

We conclude that the graph $G_n'$ has a subgraph, isomorphic to the graph $T_n$, defined as follows. Vertices of $T_n$ are 3-element subsets of $\{1,2,\ldots, n\}$. Two subsets are connected if the two largest element in one coincide with two smallest element in the other. It is now enough to show that the chromatic number of $T_n$ goes to $+\infty$ as $n\to+\infty$. That is, for every $q\in\mathbb{N}$, we have to show that there exists $n_0\in\mathbb{N}$ such that $T_n$ is not $q$-colorable for every $n\ge n_0$.

This is an easy consequence of the Ramsey theorem for hypergraphs:

\begin{theorem}[\cite{erdos1952combinatorial}] Let $s > t\ge 2$ and $q$ be natural numbers. Then there exists $n_0$ such that for every $n\ge n_0$ the following holds. For every coloring of the $t$-element subsets of $\{1, 2, \ldots, n\}$ in $q$ colors there exists an $s$-element subset of $\{1,2,\ldots, n\}$ such that all its $t$-element subsets have the same color.
    
\end{theorem}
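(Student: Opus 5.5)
We prove this by induction on $t$, and it is convenient to allow $t=1$ as the base case, where it is just the pigeonhole principle: if $n\ge q(s-1)+1$, some color class of the singletons has at least $s$ elements. Monotonicity in $n$ is automatic, since a coloring of the $t$-subsets of $\{1,\ldots,n\}$ restricts to a coloring of the $t$-subsets of $\{1,\ldots,n_0\}$. So it suffices to show that for each $t,s,q$ some finite $n_0=R_t(s;q)$ works, allowing $s\ge t$ and treating $s=t$ trivially.

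For the inductive step we assume the statement for $t-1$ with every $s$ and $q$, and fix a coloring $\chi$ of the $t$-subsets of $\{1,\ldots,n\}$. We construct greedily a sequence $a_1<a_2<\cdots<a_m$ together with nested sets $\{1,\ldots,n\}=S_0\supseteq S_1\supseteq\cdots$. At step $p$ we take $a_p=\min S_{p-1}$ and consider the remaining elements $S_{p-1}\setminus\{a_p\}$. Each element $y$ there gets the \emph{type} $T\mapsto\chi(T\cup\{a_p\})$, a function on the $(t-2)$-subsets $T$ of $\{a_1,\ldots,a_{p-1}\}$. We then let $S_p$ be a largest class of equal type.

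This guarantees the key property: for every $(t-1)$-set $\{a_{i_1}<\cdots<a_{i_{t-1}}\}$ from the sequence and every later $a_j$, the color $\chi(\{a_{i_1},\ldots,a_{i_{t-1}},a_j\})$ does not depend on $j>i_{t-1}$. We do the bookkeeping so that the type records exactly the colors of $t$-sets whose largest element among the $a$'s is $a_p$ and whose top element comes later. The number of types at step $p$ is at most $q^{\binom{p-1}{t-2}}$, so $|S_p|\ge (|S_{p-1}|-1)/q^{\binom{p-1}{t-2}}$. Hence if $n$ is large enough (a tower-type bound in $m$), the process runs for $m$ steps, with $m=R_{t-1}(s;q)+1$.

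Next we define a coloring $\chi'$ of the $(t-1)$-subsets of $\{a_1,\ldots,a_{m-1}\}$ by $\chi'(U)=\chi(U\cup\{a_j\})$ for any $a_j$ above $\max U$; by the key property this is well defined. By the induction hypothesis there is an $s$-element set $A\subseteq\{a_1,\ldots,a_{m-1}\}$ all of whose $(t-1)$-subsets receive the same $\chi'$-color. Any $t$-subset of $A$ splits as $U\cup\{\max\}$, and its $\chi$-color equals $\chi'(U)$, which is that same color. So $A$ is monochromatic for $\chi$.

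The main obstacle is getting the greedy step right for $t\ge 3$. One must refine by the colors of \emph{all} $t$-sets that end in an arbitrary later element and whose other $t-1$ elements lie among the chosen $a$'s, not only those containing the current $a_p$. Otherwise the well-definedness of $\chi'$ fails. I would first check the case $t=2$, where the type is simply $\chi(\{a_p,y\})$ and $m$ steps need $n\gtrsim q^{m}$. Then I would phrase the general invariant as: ``for all $y,y'\in S_p$ and all $(t-1)$-subsets $U\subseteq\{a_1,\ldots,a_p\}$, $\chi(U\cup\{y\})=\chi(U\cup\{y'\})$.'' This is maintained by partitioning $S_{p-1}\setminus\{a_p\}$ according to the at most $q^{\binom{p}{t-1}}$ possible color vectors.
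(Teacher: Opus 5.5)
The paper gives no proof of this statement. It quotes the theorem from Erdős and Rado and uses it as a black box to show that the chromatic number of $T_n$ is unbounded. You instead supply a self-contained proof: the classical induction on $t$ through an end-homogeneous sequence. Your proof is correct. The invariant that all $y\in S_p$ agree on $\chi(U\cup\{y\})$ for every $(t-1)$-set $U\subseteq\{a_1,\ldots,a_p\}$ makes $\chi'$ well defined. Applying the $(t-1)$ case to $m-1=R_{t-1}(s;q)$ chosen points gives a $\chi'$-monochromatic $s$-set $A$, and every $t$-subset of $A$ inherits that color through its largest element. Compared with the citation, your route buys self-containment and an explicit (tower-type after iterating in $t$) value of $n_0$; the paper needs neither, only existence.

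Two small corrections, neither of which is a gap:
\begin{enumerate}
\item As written, your type $T\mapsto\chi(T\cup\{a_p\})$ does not depend on $y$. You mean $T\mapsto\chi(T\cup\{a_p,y\})$, as your ``bookkeeping'' sentence makes clear.
\item The ``main obstacle'' in your last paragraph does not exist. Refining $S_{p-1}\setminus\{a_p\}$ only by the colors $\chi(U\cup\{y\})$ with $a_p\in U$ already maintains your invariant. For $U\subseteq\{a_1,\ldots,a_{p-1}\}$ the invariant is inherited from $S_{p-1}\supseteq S_p$: all elements of $S_{p-1}$ already agree on those colors. So the coarse refinement and the full refinement over all $(t-1)$-subsets of $\{a_1,\ldots,a_p\}$ induce exactly the same partition. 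Well-definedness of $\chi'$ is never at risk, and the count $q^{\binom{p-1}{t-2}}$ from your third paragraph is the accurate one. The cruder count $q^{\binom{p}{t-1}}$ is also enough for finiteness.
\end{enumerate}
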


Fix any $q$ and take the bound for $n_0$ from the last theorem for $s = 4, t = 3$. Consider any $n\ge n_0$ and assume for contradiction that $T_n$ is $q$-colorable. Apply the result to the corresponding coloring of the vertices of $T_n$. There will be  a 4-element subset $\{i, j, k,\ell\}$ such that all its 3-element subsets are colored in the same way. However, if $i < i < k < \ell$, then $\{i, j,k\}, \{j,k,\ell\}$ are connected in $T_n$ and hence cannot be colored by the same color, a contradiction.

\begin{remark}
    The idea to use Ramsey theorem for hypergraphs for the lower bound on the chromatic number of $T_n$ was obtained in a conversation with an LLM. All the other parts of the proof and the writing of the paper have been performed without the AI assistance.
\end{remark}

\end{document}